\renewcommand{\phi}{\varphi}
\renewcommand{\P}{\mathbb{P}}
\newcommand{\E}{\mathbb{E}}
\newcommand{\N}{\mathbb{N}}
\newcommand{\R}{\mathbb{R}}
\newcommand{\KL}{\mathrm{KL}}
\def\ds1{\mathds{1}}
\renewcommand{\epsilon}{\varepsilon}
\newcommand{\argmax}{\mathop{\mathrm{argmax}}}
\renewcommand{\tilde}{\widetilde}
\newlength{\minipagewidth}
\newcommand{\bookbox}[1]{
\par\medskip\noindent
\framebox[\textwidth]{
\begin{minipage}{\minipagewidth}
{#1}
\end{minipage} } \par\medskip }
\newcommand{\beq}{\begin{equation}}
\newcommand{\eeq}{\end{equation}}
\newcommand{\beqa}{\begin{eqnarray}}
\newcommand{\eeqa}{\end{eqnarray}}
\newcommand{\beqan}{\begin{eqnarray*}}
\newcommand{\eeqan}{\end{eqnarray*}}
\def\ba#1\ea{\begin{align*}#1\end{align*}} 
\def\banum#1\eanum{\begin{align}#1\end{align}} 
\newtheorem{theorem}{Theorem}
\newtheorem{lemma}{Lemma}
\newcommand{\BlackBox}{\rule{1.5ex}{1.5ex}}  
\newenvironment{proof}{\par\noindent{\bf Proof\ }}{\hfill\BlackBox\\[2mm]}
\begin{document}

\title{Most Correlated Arms Identification}
\author{
Che-Yu Liu, S{\'e}bastien Bubeck  \\
Department of Operations Research and Financial Engineering, \\
Princeton University \\
{\tt cheliu@princeton.edu} , {\tt sbubeck@princeton.edu}\\ 
}

\date{\today}

\maketitle

\begin{abstract}
We study the problem of finding the most mutually correlated arms among many arms. We show that adaptive arms sampling strategies can have significant advantages over the non-adaptive uniform sampling strategy. Our proposed algorithms rely on a novel correlation estimator. The use of this accurate estimator allows us to get improved results for a wide range of problem instances. 
\end{abstract}


\section{Introduction}

We define the {\em most correlated arms identification} problem as follows. Let $K$ be an integer and $X^t=(X_{1,t},\ldots, X_{K,t})^{\top}$, $t=1,2,\ldots$ be a sequence of independent, identically distributed Gaussian random vectors with zero mean and an {\em unknown} covariance matrix $\Sigma=(\sigma_{ij})_{i,j \in [K]}$. We use $\P_{\Sigma}$ to denote the corresponding probability measure  (on the natural probability space for this model) and make the following assumptions on the entries of $\Sigma$:
$ \forall i,j \in [K], \sigma_{ii}=1 \text{ and } \sigma_{ij} \geq 0 $.
Under these assumptions, $\Sigma$ can also be seen as the correlation matrix. Consider now an agent facing $K$ arms. At each time step $t=1,2,\ldots$, the agent selects a subset of arms $A_t \subset [K]$ and observes the corresponding values in $X^t$, denoted by $X^t_{A_t} = \{X_{i,t}, i\in A_t\}$. The agent is allowed to employ an {\em adaptive sampling strategy}, that is, the selection of $A_t$ may depend on the past observations $H_{t-1}=\{A_s, X^s_{A_s}\}_{s \in [t-1]}$. This is in contrast to the non-adaptive {\em uniform sampling} strategy, where $A_t$ is always chosen to be $[K]$. The task of the agent is to return as rapidly as possible a subset of $h$ most mutually correlated arms. More precisely, the agent is interested in finding
$$ S^{*} \in \argmax_{S\subset [K], |S|=h} \sum_{j,\ell \in S, j \neq \ell} \sigma_{j,\ell} .$$
Throughout this paper, we assume that there is an unique solution $S^{*}$ to the above problem and we call this unique solution the {\em optimal subset}. Furthermore an arm $i\in [K]$ is called {\em optimal} if it belongs to $S^{*}$ and is called {\em suboptimal} otherwise.  \\

The goal of this paper is to devise algorithms that output reliably $S^{*}$ while using as few {\em samples} as possible. Throughout this paper, the word {\em sample} is reserved for the realized value of a real-valued random variable. In comparison, the sampling outcome of a multi-dimensional random vector is called a {\em sample vector}. Following the tradition of the adaptive-exploration literature, we study our problem in two different settings.  

\noindent \textbf{Fixed-Budget:} In the fixed-budget setting, we are given a fixed budget of $n$ samples and are asked to output a subset  of arms $\hat{S}$ with size $h$ as soon as the sampling budget has been used up. The task here is to devise a sampling and decision strategy that achieves a small probability of returning a wrong subset, that is, one wants to minimize $\P_{\Sigma}(\hat{S} \neq S^{*})$. 

\noindent \textbf{Fixed-Confidence:} In the fixed-confidence setting, we are given a fixed confidence level $\delta > 0$ and there is no explicit budget limitation. We look for a sampling, stopping and decision strategy that returns $S^{*}$ with probability at least $1-\delta$ when it stops, regardless of the underlying correlation matrix $\Sigma$. The performance of the strategy is evaluated by the number of samples obtained before it terminates, either in expectation or with high probability. \\

In Section 4 and 5, we present two algorithms SR-C and SE-C that are designed for the above two settings respectively. The main innovation in our algorithms is the use of an accurate correlation estimator, called difference-based correlation estimator. In Section 2, we define and analyze this new estimator and argue that it can substantiate in an optimal way the intuition that the estimation task becomes easier when the correlation is close to $1$, a feature that the classical correlation estimator can not attain. Inspired by the statistical properties of the difference-based correlation estimator, we define the suboptimality ratio of $B$ with respect to $A$ (for the correlation matrix $\Sigma$)  as 
$$\mathcal{D}_{\Sigma}(A,B)= \frac{\sum_{(j,\ell) \in B^2 \backslash (A \cap B)^2, j \neq \ell} (1-\sigma_{j,\ell}) }{ \sum_{(j,\ell) \in A^2 \backslash (A \cap B)^2, j \neq \ell} (1-\sigma_{j,\ell})}$$
for two subsets of arms $A$ and $B$ with size $h$ and with the convention that $\frac{0}{0}=1$.
This ratio is a measure of how uncorrelated the arms in $B$ are compared to the arms in $A$.  Note that \\ $\mathcal{D}_{\Sigma}(A,B) \geq 1$ if only if $\sum_{j,\ell \in A , j \neq \ell} \sigma_{j,\ell} \geq \sum_{j,\ell \in B , j \neq \ell} \sigma_{j,\ell}$. Now for an arm $i \in [K]$, denote by $R_{i,\Sigma}$ the suboptimality ratio of the arm $i$ with respect to the optimal set $S^{*}$, defined as 
$$R_{i,\Sigma}= \min_{i \in B \subset [K], |B|=h} \mathcal{D}_{\Sigma}(S^{*}, B).$$ 
It is clear that $R_{i,\Sigma} = 1$ if arm $i$ is optimal and $R_{i,\Sigma} > 1$ otherwise. Therefore, the order statistics of $R_{i,\Sigma}$ satisfies 
$$1=R_{(1),\Sigma}=\ldots = R_{(h),\Sigma} < R_{(h+1),\Sigma} \leq \ldots \leq R_{(K),\Sigma}.$$ 
These suboptimality ratios determine the number of times the suboptimal arms need to be drawn in our algorithms. More precisely, define the $\alpha$ function as $\alpha(\theta) = \frac{1}{2}(\log \theta-1+\frac{1}{\theta})$, for $\theta \geq 1$. We consider in Section 3 the non-adaptive sampling setting where we draw each arm an equal number of times and show that we need $\tilde{ {\Theta}}\left(\frac{K}{\alpha(R_{(h+1),\Sigma})}+K \right)$ samples\footnote{The notation $\tilde{{\Theta}}, \tilde{\mathcal{O}}$ hides a logarithmic factor.} to reliably identify $S^{*}$ by providing a matching upper and lower bound. Furthermore, we prove in Section 4 and 5 that SR-C and SE-C need at most $\tilde{ \mathcal{O}}(\mathrm{H_C}+K)$ samples to find $S^{*}$ where $\mathrm{H_C}$ is defined as 
$$\mathrm{H_C} = \frac{h}{\alpha \left(R_{(h+1),\Sigma}\right)} + \sum_{i=h+1}^{K} \frac{1}{\alpha \left(R_{(i),\Sigma}\right)}.$$
Note that $\mathrm{H_C}$ can be much smaller than $\frac{K}{\alpha(R_{(h+1),\Sigma})}$ for a wide range of $\Sigma$, especially when the entries of $\Sigma$ are inhomogeneous. This shows that adaptive sampling strategies could have significant advantages over non-adaptive uniform sampling. We close the paper with a discussion in Section 6 of possible improvements and challenges. \\

\noindent \textbf{Remark:} Note that the function $\alpha: [1, +\infty) \rightarrow \R$ defined above is a positive, strictly increasing function. Moreover, $\alpha(\theta) = \Theta\left((1-\theta)^2\right)$ when $\theta \rightarrow 1$ and $\alpha(\theta) = \Theta\left(\log(\theta)\right)$ when $\theta \rightarrow +\infty$. It follows from these facts that for any $q > 0$, there exists two constants $c_1(q),c_2(q) >0$ that only depend on $q$ such that $c_1(q)  \alpha(\theta) \leq  \alpha(\theta^q) \leq c_2(q)  \alpha(\theta)  $ for any $\theta \geq 1$. 

\subsection*{Related Work}
The problem we study in this paper is similar in spirit to the {\em best arm identification problem} in multi-armed bandits. In the latter problem, an agent repeatedly selects an arm and observes a sample reward drawn from the arm's reward distribution, and then he is asked to return the arm with the highest mean reward.  The Successive Elimination algorithm  in \cite{EMM06} was shown to find the single best arm $i^{*}$ with high probability with $\tilde{\mathcal{O}} \left( \sum_{i \neq i^{*}} \frac{1}{\Delta_i^2} \right)$ samples where $\Delta_i$ is the gap between the mean reward of the best arm and that of a suboptimal arm $i$. \cite{BMS09} and \cite{ABM10} study the same problem under the fixed-budget setting, in particular the Successive Rejects procedure of \cite{ABM10} was shown to require essentially as many samples as Successive Elimination. The algorithms that we propose in Sections 4 and 5 for most correlated arms identification are inspired by these algorithms for best arm identification. In particular they share the same high-level idea that the more suboptimal an arm is with respect to the best arms, the less samples we need to distinguish it from the best arms. In the fixed-budget setting, this is done by distributing the sample budget to the arms in an adaptive way based on their correlation estimates. For the fixed-confidence setting we build confidence intervals on correlations among arms and uniformly sample all the arms until we have enough confidence to identify and exclude some suboptimal arms. 

Our work is different from \cite{CBL12}, \cite{CBL12b} and \cite{CLS13} where the task is the detection of the presence of a sparse and correlated subset of components from samples of a high-dimensional Gaussian distribution. Their work focuses on determining whether there is a correlated subset while the task in this paper is to accurately identify the subset with largest mutual correlations. Perhaps more importantly, on the contrary to these work, our algorithms adapt to the potential heterogeneity in the correlation matrix $\Sigma$.

\section{Correlation Estimators} 
In this section, we describe a new correlation estimator, called difference-based correlation estimator. It is intuitive that the task of correlation estimation becomes easier when the correlation is close to $1$. We are going to see that the classical correlation estimator fails to capture this intuition while the difference-based correlation estimator can quantify it in an optimal way.  The use of this novel estimator will allow us to largely reduce the sample complexity of our algorithms for a wide range of correlation matrices $\Sigma$. 

\subsection{Classical Correlation Estimator}
For $j \neq \ell \in [K]$, consider the classical correlation estimator defined as
\begin{eqnarray*}
\tilde{\sigma}_{j,\ell,t} &=& \frac{1}{t}\sum_{s=1}^{t} X_{j,s}X_{l,s} = \frac{1}{t}\left(\sum_{s=1}^{t}\frac{(X_{j,s}+X_{l,s})^2}{4}- \sum_{s=1}^{t}\frac{(X_{j,s}-X_{l,s})^2}{4} \right) \\
 &=&  \sigma_{j\ell}+ \left(  
 \frac{1+ \sigma_{j\ell}}{2} \left( \frac{1}{t}\sum_{s=1}^{t}\frac{(X_{j,s}+X_{l,s})^2}{2(1+ \sigma_{j\ell})}-1\right)  
 -  \frac{1- \sigma_{j\ell}}{2} \left( \frac{1}{t}\sum_{s=1}^{t}\frac{(X_{j,s}-X_{l,s})^2}{2(1- \sigma_{j\ell})}-1\right)   \right) .
\end{eqnarray*} 
By observing that $\sum_{s=1}^{t}\frac{(X_{j,s}+X_{l,s})^2}{2(1+ \sigma_{j\ell})}$ and $\sum_{s=1}^{t}\frac{(X_{j,s}-X_{l,s})^2}{2(1- \sigma_{j\ell})}$ are independent, chi-square distributed with $t$ degrees of freedom, we see that the magnitude of the fluctuation of $\tilde{\sigma}_{j,\ell,t} $ around  $\sigma_{j\ell}$ only has a very mild dependency on $\sigma_{j\ell}$. Its estimation accuracy improves only slightly even when $\sigma_{j\ell} = 1$.  

\subsection{Difference-Based Correlation Estimator}
Our main idea for getting a good correlation estimator is to draw inspiration from the likelihood ratio tests. Assume that $K=2$ and consider the following testing problem. 
$$ (T) \hspace{0.3cm} \begin{cases}
H_0 : \Sigma = \Sigma_0  \\
H_1 : \Sigma = \Sigma_1 
\end{cases}  $$
where $\Sigma_0 = \bigl( \begin{smallmatrix} 1 & \rho_0 \\ \rho_0 & 1 \end{smallmatrix} \bigr)$ and $\Sigma_1 = \bigl( \begin{smallmatrix} 1 & \rho_1 \\ \rho_1 & 1 \end{smallmatrix} \bigr)$
with $1 > \rho_0 > \rho_1 \geq 0$. Since the likelihood ratio test is an optimal test, it must be able to distinguish $H_0$ and $H_1$ with high accuracy in the case when $\rho_0$ is close to $1$ while $\rho_1$ is bounded away from $1$. This suggests that the study of the likelihood ratio test may be helpful in constructing a good correlation estimator. Denote by $f_{Q}$  the probability density function of a probability distribution $Q$. It is easy to verify that the likelihood ratio statistics of the testing problem $(T)$ can be written as 
$$ \Lambda((X^s)_{s=1,\ldots,t}) = \frac{f_{\mathcal{N}(0,(1+\rho_0)I_t)}\left( \left( \frac{X_{1,s}+X_{2,s}}{\sqrt{2}} \right)_{s=1,\ldots, t} \right) }{f_{\mathcal{N}(0,(1+\rho_1)I_t)}\left( \left( \frac{X_{1,s}+X_{2,s}}{\sqrt{2}} \right)_{s=1,\ldots, t} \right)}  
\cdot \frac{f_{\mathcal{N}(0,(1-\rho_0)I_t)}\left( \left( \frac{X_{1,s}-X_{2,s}}{\sqrt{2}} \right)_{s=1,\ldots, t} \right)}{f_{\mathcal{N}(0,(1-\rho_1)I_t)}\left( \left( \frac{X_{1,s}-X_{2,s}}{\sqrt{2}} \right)_{s=1,\ldots, t} \right)}. $$
The two fractions above can be seen as the likelihood ratio statistics of the following two testing problems.
$$ (T_1) \hspace{0.3cm} \begin{cases}
H_0 :  \frac{X_{1,s}+X_{2,s}}{\sqrt{2}} \sim  \mathcal{N}(0,1+\rho_0)  \\
H_1 : \frac{X_{1,s}+X_{2,s}}{\sqrt{2}} \sim  \mathcal{N}(0,1+\rho_1) 
\end{cases}, \hspace{1cm}
(T_2) \hspace{0.3cm} \begin{cases}
H_0 :  \frac{X_{1,s}-X_{2,s}}{\sqrt{2}} \sim  \mathcal{N}(0,1-\rho_0)  \\
H_1 : \frac{X_{1,s}-X_{2,s}}{\sqrt{2}} \sim  \mathcal{N}(0,1-\rho_1) 
\end{cases} .
$$
Now observe that the testing problem $(T_2)$ is always easier than the problem $(T_1)$ since we always have $\frac{1+\rho_0}{1+\rho_1} \leq \frac{1-\rho_1}{1-\rho_0}$. This suggests that the hardness of the problem stays roughly the same if we replace the original problem $(T)$ by problem $(T_2)$. Moreover, for problem $(T_2)$, it is natural to consider tests based on the test statistics $\frac{1}{t}\sum_{s=1}^{t}\frac{(X_{1,s}-X_{2,s})^2}{2}$. This leads us to  define the difference-based correlation estimator for $\sigma_{j\ell}$ with $j \neq \ell \in [K]$ as 
$$\hat{\sigma}_{j,\ell,t}=1-\frac{1}{t}\sum_{s=1}^{t}\frac{(X_{j,s}-X_{l,s})^2}{2}.$$ 
To analyze its statistical properties, we further define the random variable $Y_{j,\ell,t}$ as 
$$Y_{j,\ell,t}=\frac{1}{1-\sigma_{j\ell}}\sum_{s=1}^{t}\frac{(X_{j,s}-X_{l,s})^2}{2}$$
with the convention that $\frac{0}{0}=1$. Then $Y_{j,\ell,t}$ follows a chi-square distribution with $t$ degrees of freedom and we have the following relations
$$ 1-\hat{\sigma}_{j,\ell,t} = (1-\sigma_{j\ell})\frac{Y_{j,\ell,t}}{t}  \text{ and } 
\hat{\sigma}_{j,\ell,t} = \sigma_{j,\ell} - (1-\sigma_{j\ell})(\frac{Y_{j,\ell,t}}{t}-1). $$ 
For a fixed number of samples, the deviation of the difference-based correlation estimator from the true correlation is proportional to $1-\sigma_{j\ell}$. In other words, the accuracy of estimation increases when $\sigma_{j\ell}$ approaches $1$.

\subsection{Optimality of the Difference-Based Correlation Estimator }

To illustrate the strength of the difference-based correlation estimator, we return to the testing problem $(T)$.  The testing accuracy of a test $\phi$ is measured by its maximal risk, defined as 
$$ \mathcal{R}_t(\phi)= \max\left(\P_{\Sigma_0} (\phi=1), \P_{\Sigma_1} (\phi=0) \right).$$  
Now, define $R=\frac{1-\rho_1}{1-\rho_0} > 1$ and consider the test $\phi^{*}$ which outputs $0$ if $1-\hat{\sigma}_{1,2,t} \leq (1-\rho_0)R^{\frac{1}{2}}$ and outputs $1$ otherwise. Using Lemma 5 in the Appendix, one has the following upper bound on the maximal risk of $\phi^{*}$,
$$\mathcal{R}_t(\phi^{*}) \leq  \exp\left(-t\cdot \alpha(R^{\frac{1}{2}})\right) \leq  \exp\left(-c_1(1/2) \cdot t\cdot \alpha(R)\right). $$ 
On the other hand, using Lemma 6 and Lemma 7 in the Appendix, we know that the maximum risk for any test $\phi$ is lower bounded as follows.
\begin{eqnarray*}
\mathcal{R}_t(\phi)  & \geq &  \frac{1}{4}\exp\left(-  \KL\left( \mathcal{N}(0, \Sigma_0)^{\otimes t}, \mathcal{N}(0, \Sigma_1)^{\otimes t} \right) \right)  \\
    & =& \frac{1}{4}\exp\left(- t \cdot \KL\left( \mathcal{N}(0, \Sigma_0), \mathcal{N}(0, \Sigma_1)\right) \right) 
    \geq  \frac{1}{4} \exp\left(- c t \cdot \alpha(R) \right)
\end{eqnarray*}
where $c$ is an universal constant. We conclude that the test $\phi^{*}$ using the difference-based correlation estimator is an optimal test in the sense that it needs $\mathcal{O}(\frac{1}{\alpha(R)})$ samples to reliably identify the correct $\Sigma$ and no test can do better.

\section{Non-Adaptive Sampling Setting}
In this section, we study the most correlated arms identification problem under the classical non-adaptive setting where we have access to $m$ full sample vectors $(X^t)_{t=1,\ldots, m }$ (this corresponds to $n=Km$ samples of arms). We show that a naive decision policy based on the difference-based correlation estimator needs $n = \tilde{ \mathcal{O}}\left(\frac{K}{\alpha(R_{(h+1),\Sigma})}+K \right)$ samples to find $S^{*}$. We also provide a matching lower bound (up to a logarithmic factor on $K$) on problem instances with a certain correlation matrix structure.

\subsection{Upper Bound}
Consider the decision policy that simply outputs the empirically optimal subset $\hat{S}$, as described in Figure 1. 
\begin{figure}[h!]
\bookbox{ 
\textbf{Input:} $m$ full samples $(X^t)_{t=1,\ldots, m }$.  \\
\textbf{Output:} $ \hat{S} = \arg\max_{S\subset [K], |S|=h} \sum_{j,\ell \in S, j \neq \ell} \hat{\sigma}_{j,\ell, m}$
} 
\caption{
A naive decision policy under the non-adaptive sampling setting.}
\end{figure}

\begin{theorem}
The probability of error of the decision policy in Figure 1 is bounded as 
$$ \P_{\Sigma} (\hat{S} \neq S^{*}) \leq K(K-1) \exp\left(- cm \cdot \alpha\left(R_{(h+1),\Sigma} \right) \right) $$ 
where $c$ is an universal constant.
\end{theorem}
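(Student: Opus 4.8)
The plan is to reduce the event $\{\hat S \neq S^*\}$ to a union of elementary multiplicative deviation events on the individual pair estimators, thereby avoiding a union bound over the exponentially many competitor subsets $B$. First I would rewrite the decision rule: since $\hat S$ minimizes $\sum_{j,\ell \in S, j\neq\ell}(1-\hat\sigma_{j,\ell,m})$ over $|S|=h$, the set $S^{*}$ is selected exactly when, for every $B\neq S^{*}$ (writing $A=S^{*}$), the pairs inside $A\cap B$ cancel and
$$\sum_{(j,\ell)\in A^2\setminus(A\cap B)^2, j\neq\ell}(1-\hat\sigma_{j,\ell,m}) < \sum_{(j,\ell)\in B^2\setminus(A\cap B)^2, j\neq\ell}(1-\hat\sigma_{j,\ell,m}).$$
This is the empirical version of the inequality $\mathcal{D}_{\Sigma}(A,B)>1$, which is precisely what motivates controlling each factor $1-\hat\sigma_{j,\ell,m}$ multiplicatively. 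Using $1-\hat\sigma_{j,\ell,m}=(1-\sigma_{j\ell})\frac{Y_{j,\ell,m}}{m}$, I would define the good event $\mathcal{E}$ on which, for every pair $(j,\ell)$ with $\sigma_{j\ell}<1$,
$$R_{(h+1),\Sigma}^{-1/2} < \frac{Y_{j,\ell,m}}{m} < R_{(h+1),\Sigma}^{1/2}.$$

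Second, I would show that $\mathcal{E}$ implies $\hat S=S^{*}$. On $\mathcal{E}$, for any $B\neq S^{*}$ let $\Delta_A$ and $\Delta_B$ denote the true sums $\sum (1-\sigma_{j\ell})$ over the denominator and numerator pair-sets defining $\mathcal{D}_{\Sigma}(A,B)$. Then the left-hand sum above is $< R_{(h+1),\Sigma}^{1/2}\,\Delta_A$ and the right-hand sum is $> R_{(h+1),\Sigma}^{-1/2}\,\Delta_B$. Since every $B\neq S^{*}$ contains at least one suboptimal arm $i$, one has $\mathcal{D}_{\Sigma}(S^{*},B)\geq R_{i,\Sigma}\geq R_{(h+1),\Sigma}$, i.e. $\Delta_B\geq R_{(h+1),\Sigma}\,\Delta_A$; hence $R_{(h+1),\Sigma}^{-1/2}\Delta_B\geq R_{(h+1),\Sigma}^{1/2}\Delta_A$, and chaining the inequalities yields the required strict comparison, so $\hat S=S^{*}$. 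Note that the random variables $Y_{j,\ell,m}$ are correlated across overlapping pairs, but this causes no difficulty: the argument only needs a union bound, not independence.

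Third, I would bound $\P(\mathcal{E}^c)$ by a union bound over the $K(K-1)$ ordered pairs together with the chi-square tail estimates of Lemma 5, applied with threshold $\theta=R_{(h+1),\Sigma}^{1/2}$. Because the upper-tail rate $\frac12(\theta-1-\log\theta)$ dominates $\alpha(\theta)$ for $\theta\geq 1$, both tails are bounded by $\exp(-m\,\alpha(R_{(h+1),\Sigma}^{1/2}))$, giving $\P(\mathcal{E}^c)\leq K(K-1)\exp(-m\,\alpha(R_{(h+1),\Sigma}^{1/2}))$. Finally I would invoke the inequality $\alpha(R_{(h+1),\Sigma}^{1/2})\geq c_1(1/2)\,\alpha(R_{(h+1),\Sigma})$ from the Remark to replace the exponent by $c\cdot\alpha(R_{(h+1),\Sigma})$ with the universal constant $c=c_1(1/2)$, which completes the bound.

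The main obstacle is the second step: the non-obvious point is that controlling identification against all $\binom{K}{h}-1$ competitors collapses to a single per-pair multiplicative control, which hinges on the inequality $\mathcal{D}_{\Sigma}(S^{*},B)\geq R_{(h+1),\Sigma}$ for every $B\neq S^{*}$ and on choosing the geometric-mean radius $R_{(h+1),\Sigma}^{1/2}$ so that the estimator errors in the numerator and denominator of $\mathcal{D}_{\Sigma}$ are absorbed symmetrically. Verifying strictness at the boundary case $\mathcal{D}_{\Sigma}(S^{*},B)=R_{(h+1),\Sigma}$ is exactly what dictates using open deviation intervals, which costs nothing in the Chernoff tail bounds.
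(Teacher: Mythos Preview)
Your proposal is correct and follows essentially the same approach as the paper: both reduce $\{\hat S\neq S^{*}\}$ to the failure of a per-pair multiplicative control at level $R_{(h+1),\Sigma}^{1/2}$, then apply the chi-square tail bound (Lemma~5) and a union bound over the $K(K-1)$ pairs, finishing with $\alpha(R^{1/2})\geq c_1(1/2)\,\alpha(R)$. The only cosmetic difference is that the paper argues contrapositively (if $\hat S\neq S^{*}$ then some pair must deviate by a factor $R^{1/2}$), whereas you argue directly via the good event $\mathcal{E}$; the content is identical.
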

\begin{proof}
First, observe that if $\hat{S} \neq S^{*}$, then  
$$\left( \sum_{(j,\ell) \in \hat{S}^2 \backslash (S^{*} \cap \hat{S})^2, j \neq \ell} (1-\sigma_{j,\ell})\right) \,\geq  \,  R_{(h+1),\Sigma} \left( \sum_{(j,\ell) \in S^{*2} \backslash (S^{*} \cap \hat{S})^2, j \neq \ell} (1-\sigma_{j,\ell}) \right) $$ 
However, by definition of $\tilde{S}$, one has 
$$ \left( \sum_{(j,\ell) \in \hat{S}^2 \backslash (S^{*} \cap \hat{S})^2, j \neq \ell} (1-\hat{\sigma}_{j,\ell,m})\right) \,\leq  \, \left( \sum_{(j,\ell) \in S^{*2} \backslash (S^{*} \cap \hat{S})^2, j \neq \ell} (1-\hat{\sigma}_{j,\ell,m}) \right) $$ 
Therefore, if $\hat{S} \neq S^{*}$, then one of the following two events must hold true
$$\{ \exists j \neq \ell \text{ such that } (1-\hat{\sigma}_{j,\ell,m}) \geq R_{(h+1),\Sigma}^{\frac{1}{2}}  (1-\sigma_{j,\ell}) \}$$ 
$$\{ \exists j \neq \ell \text{ such that } (1-\hat{\sigma}_{j,\ell,m}) \leq R_{(h+1),\Sigma}^{-\frac{1}{2}}  (1-\sigma_{j,\ell}) \}.$$
Therefore, using Lemma 5, one gets
$$ \P_{\Sigma} (\hat{S} \neq S^{*}) \leq K(K-1) \exp\left(- m \cdot \alpha\left(R_{(h+1),\Sigma}^{\frac{1}{2}} \right) \right)
\leq K(K-1) \exp\left(- m \cdot c_1(\frac{1}{2}) \alpha\left(R_{(h+1),\Sigma} \right) \right) $$
which completes the proof.
\end{proof}

\subsection{Lower Bound} \label{sec:nonadaptivelowerbound}
Let $1 > \rho_h > \rho_{h+1} \geq \ldots \geq \rho_{K}$ and consider the correlation matrix $\Sigma$ defined by
$$ \sigma_{j \ell} = \begin{cases} 
1,  & \mbox{if } j = \ell \mbox{ or } j,\ell < h \\
\rho_j \rho_{\ell} , & \mbox{if } j \neq \ell \mbox{ and } j, \ell \geq h \\
 \rho_{\ell} , & \mbox{if } j  < h  \leq \ell \\
 \rho_{j} , & \mbox{if } j  \geq h  > \ell 
\end{cases} $$ 
It is easy to verify that the optimal subset $S^{*}$ is $[h]$ and for $i \geq h+1$, $R_{(i), \Sigma} = R_{i,\Sigma} = \frac{1-\rho_i}{1-\rho_h}$. Now let $\rho_{h+1}^{'} = 1 - \frac{(1-\rho_h)^2}{1-\rho_{h+1}}$ and define another correlation matrix $\Sigma^{'}$ which is constructed from $\Sigma$ by replacing $\rho_{h+1}$ by $\rho_{h+1}^{'}$ in the definition of $\Sigma$. By observing that $\rho_{h+1}^{'} > \rho_h$, we see that the optimal subset for $\Sigma^{'}$ is $S^{*}_{\Sigma^{'}} = [h-1] \cup \{h+1\}$ and
$$ R_{(h+1), \Sigma^{'}} = R_{h,\Sigma^{'}} = \frac{1-\rho_h}{1-\rho_{h+1}^{'}},   R_{(i), \Sigma} = R_{i,\Sigma} = \frac{1-\rho_i}{1-\rho_{h+1}^{'}}, \text{ for } i \geq h+2 .$$ 
Note that $R_{(i),\Sigma^{'}} \geq R_{(i),\Sigma} $ for any $i \geq h+1$. So the identification problem with $\Sigma^{'}$ is always easier than that with $\Sigma$. The following theorem shows that for any decision policy with uniform sampling, its probability of error is at least $ \Omega \left( \exp\left(- cm \cdot \alpha(R_{(h+1), \Sigma}) \right) \right) $ under one of the two problem instances with underlying correlation matrix $\Sigma$ and $\Sigma^{'}$.    

\begin{theorem}
Let $\phi$ be a decision policy that outputs the subset $\phi((X^t)_{t=1,\ldots, m })$ when $(X^t)_{t=1,\ldots, m }$ is observed. Then 
$$ \max\left( \P_{\Sigma} (\phi \neq S^{*}), \P_{\Sigma^{'}} ( \phi \neq S^{*}_{\Sigma^{'}} )  \right)
\geq  \frac{1}{4} \exp\left(- c m \cdot \alpha(R_{(h+1), \Sigma}) \right)
\geq  \frac{1}{4} \exp\left(- c m \cdot \alpha(R_{(h+1), \Sigma^{'}}) \right) .  $$
where $c$ is an universal constant.
\end{theorem}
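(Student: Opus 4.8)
The plan is to apply the two-point (Le Cam) method to the pair of instances $\Sigma$ and $\Sigma'$, exactly as in the optimality analysis of Section 2.3. The structural fact that makes this pair hard to separate is that $\Sigma$ and $\Sigma'$ differ only through the loading of arm $h+1$, while their optimal subsets $S^{*}=[h]$ and $S^{*}_{\Sigma'}=[h-1]\cup\{h+1\}$ are distinct, so no policy can be simultaneously correct on both.

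First I would turn the decision policy $\phi$ into a binary test. Since $S^{*}\neq S^{*}_{\Sigma'}$, set $\psi=\IND[\phi\neq S^{*}]$; then $\P_{\Sigma}(\phi\neq S^{*})=\P_{\Sigma}(\psi=1)$, and because $\{\phi=S^{*}\}\subseteq\{\phi\neq S^{*}_{\Sigma'}\}$ we also have $\P_{\Sigma'}(\phi\neq S^{*}_{\Sigma'})\geq\P_{\Sigma'}(\psi=0)$. Applying Lemma 6 to the two product measures $\P_{\Sigma}^{\otimes m}$ and $\P_{\Sigma'}^{\otimes m}$ of the $m$ full sample vectors, and then tensorizing the divergence, gives
\[
\max\!\left(\P_{\Sigma}(\phi\neq S^{*}),\,\P_{\Sigma'}(\phi\neq S^{*}_{\Sigma'})\right)\;\geq\;\tfrac14\exp\!\left(-m\cdot\KL\!\left(\mathcal N(0,\Sigma),\mathcal N(0,\Sigma')\right)\right).
\]

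The heart of the argument is to show $\KL(\mathcal N(0,\Sigma),\mathcal N(0,\Sigma'))\leq c\,\alpha(R_{(h+1),\Sigma})$. Here I would exploit the single-factor structure of $\Sigma$: writing $X_j=\rho_j Z+\sqrt{1-\rho_j^2}\,\xi_j$ with a common standard Gaussian $Z$ and independent noises $\xi_j$, where we set $\rho_j=1$ for $j<h$, one recovers exactly $\sigma_{j\ell}=\rho_j\rho_\ell$ off the diagonal and unit variances. In particular the arms $1,\dots,h-1$ satisfy $X_1=\dots=X_{h-1}=Z$, so the latent factor is observed exactly. Since $\Sigma$ and $\Sigma'$ coincide on every coordinate except $h+1$, and since conditionally on $Z$ the coordinate $X_{h+1}$ is independent of all others with a law depending only on its loading, the chain rule for relative entropy collapses the $K$-dimensional divergence to that of the bivariate marginals of $(X_1,X_{h+1})$. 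These are centred Gaussians with unit variances and correlations $\rho_{h+1}$ (under $\Sigma$) and $\rho_{h+1}'$ (under $\Sigma'$).

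It then remains to invoke Lemma 7 on this $2\times2$ problem. Using $1-\rho_{h+1}'=(1-\rho_h)^2/(1-\rho_{h+1})$ together with $\rho_{h+1}'>\rho_h>\rho_{h+1}$, the relevant ratio is $\frac{1-\rho_{h+1}}{1-\rho_{h+1}'}=\big(\tfrac{1-\rho_{h+1}}{1-\rho_h}\big)^2=R_{(h+1),\Sigma}^{2}$, so Lemma 7 bounds the bivariate divergence by $c\,\alpha(R_{(h+1),\Sigma}^{2})$, which by the Remark ($\alpha(\theta^{2})\leq c_2(2)\,\alpha(\theta)$) is at most $c'\,\alpha(R_{(h+1),\Sigma})$. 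Substituting into the displayed bound yields the first inequality; the second is in fact an equality, since the same computation gives $R_{(h+1),\Sigma'}=\frac{1-\rho_h}{1-\rho_{h+1}'}=R_{(h+1),\Sigma}$. I expect the main obstacle to be justifying the KL collapse cleanly — that is, verifying, via the conditioning on the exactly-observed factor $Z=X_1$, that the remaining $K-2$ coordinates carry exactly zero information about the loading of arm $h+1$, so that no dimension-dependent factor survives in the divergence.
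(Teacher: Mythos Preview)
Your approach is exactly the paper's: the two-point Le Cam bound (Lemma~6), then the factor-model conditioning to collapse the $K$-dimensional divergence to the bivariate one for $(X_1,X_{h+1})$ --- this is precisely Lemma~8 in the Appendix --- followed by Lemma~7 and the remark $\alpha(\theta^2)\le c_2(2)\,\alpha(\theta)$. Your observation that $R_{(h+1),\Sigma'}=R_{(h+1),\Sigma}$, so that the second displayed inequality is actually an equality, is correct and sharper than what the paper states.

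There is one genuine slip, however: you bound $\KL\!\left(\mathcal N(0,\Sigma),\mathcal N(0,\Sigma')\right)$, but Lemma~7 is stated in the opposite direction, and the claimed inequality $\KL\le c\,\alpha(R_{(h+1),\Sigma})$ in fact fails in yours. With $\rho'_{h+1}>\rho_{h+1}$ and $r=(1-\rho_{h+1})/(1-\rho'_{h+1})=R_{(h+1),\Sigma}^{2}$, the bivariate divergence from the smaller to the larger correlation equals $\tfrac12(r-1-\log r)+\alpha\!\big(\tfrac{1+\rho'_{h+1}}{1+\rho_{h+1}}\big)$; the first term grows like $r$ while $\alpha(R_{(h+1),\Sigma})$ grows only like $\log r$, so no universal constant works. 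The fix is immediate: swap the roles in Lemma~6, taking $\mu_0=\P_{\Sigma'}$, $\mu_1=\P_{\Sigma}$ and $\psi=\IND[\phi\neq S^{*}_{\Sigma'}]$; then the bound involves $\KL\!\left(\mathcal N(0,\Sigma'),\mathcal N(0,\Sigma)\right)$, which is exactly the direction handled by Lemmas~7 and~8.
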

\begin{proof}
The first inequality can be easily proved by using Lemma 6 and Lemma 8 in Appendix. The second inequality is trivial. 
\end{proof}

\section{Fixed-Budget Setting: Successive Rejects for Correlation}
In this section, we present and analyze a new algorithm, SR-C (Successive Rejects for Correlation). See Figure 2 for its complete description. SR-C proceeds by rounds. At the end of each round, it computes an estimate for the suboptimality  of each arm and rejects an arm that seems least likely to be an optimal arm. And during the next round, it samples the subset of existing arms a certain number of times. SR-C stops when there are only $h$ arms left and outputs the set of remaining arms. It is easy to verify that SR-C does not exceed the sample budget $n$ (the total number of samples it uses is $n_1 + \cdots + n_{K-h-1} + (h+1)n_{K-h}$).  The following theorem shows that SR-C needs at most $\tilde{\mathcal{O}}\left( \mathrm{H_C} +K \right)$ samples to find the optimal subset. More precisely, when the sample budget $n$ is of order $\mathcal{O}(\mathrm{H_C} \log(K)^2)$, we have a non-trivial upper bound on the probability of error. 

\begin{figure}[h!]
\bookbox{ 
\textbf{Input:} sample budget $n$.  \\
Let $S_1=[K], \overline{\log(\frac{K}{h})}=1+\sum_{i=h+1}^K \frac{1}{i} ,n_0=0$ and for $k=1,\ldots,K-h$, define
$$n_k=\lceil \frac{n-K-1}{\overline{\log(\frac{K}{h})}(K+1-k)} \rceil$$  
For each round $k=1,\ldots,K-h$, do \\
(1) For $t= n_{k-1}+1,\ldots, n_k$, choose $A_t = S_k$ and sample $A_t$ from $X^t$.\\
(2) For each $i \in S_k$, compute $U_{i,k} = \max_{A \subset S_k, |A|=h} \min_{ i \in B \subset S_k, |B|=h} \hat{\mathcal{D}}_k(A,B)$ where 
$$\hat{\mathcal{D}}_k(A,B) = \frac{\sum_{(j,\ell) \in B^2 \backslash (A \cap B)^2, j \neq \ell} (1-\hat{\sigma}_{j,\ell, n_k}) }{ \sum_{(j,\ell) \in A^2 \backslash (A \cap B)^2, j \neq \ell} (1-\hat{\sigma}_{j,\ell, n_k})}  $$
with the notation $\frac{0}{0}=1$.\\
(3) Let $i_k = \arg\max_{i \in S_k} U_{i,k}$ and $S_{k+1} = S_k \backslash \{i_k\}$ . \\
\textbf{Output:} $ \hat{S} = S_{K-h+1}$.
} 
\caption{SR-C (Successive Rejects for Correlation) algorithm }
\end{figure}

\begin{theorem}
The probability of error of SR-C satisfies 
$$ \P_{\Sigma} (\hat{S} \neq S^{*}) \leq  K^3 \exp\left(- \frac{c(n-K-1)  }{\overline{\log(\frac{K}{h})} \mathrm{H_C} }  \right) $$ 
where $c$ is an universal constant.
\end{theorem}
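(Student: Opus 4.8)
The plan is to adapt the Successive Rejects analysis of \cite{ABM10} to the correlation setting, replacing the gap-based hardness by the suboptimality ratios $R_{(i),\Sigma}$ and the sub-Gaussian concentration by the chi-square concentration of the difference-based estimator (Lemma 5). First I would reduce the error event to a union over rounds: since SR-C rejects exactly $K-h$ arms and returns the survivors, one has $\hat{S}\neq S^{*}$ if and only if some optimal arm is rejected at some round, so it suffices to bound, for each $k$, the probability of the event $F_k$ that an optimal arm is rejected at round $k$ while no optimal arm was rejected before. On $F_k$ we have $S^{*}\subseteq S_k$ and $|S_k|=K+1-k$. I would then prove the deterministic claim that $F_k\subseteq \mathcal{E}_k^c$, where $\mathcal{E}_k$ is the \emph{good estimation event}
$$\mathcal{E}_k=\Big\{\,\forall j\neq \ell\in S_k:\; R_{(K-k+1),\Sigma}^{-1/4}(1-\sigma_{j,\ell}) < 1-\hat{\sigma}_{j,\ell,n_k} < R_{(K-k+1),\Sigma}^{1/4}(1-\sigma_{j,\ell})\,\Big\}.$$

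The heart of the argument is this deterministic step. Writing $R=R_{(K-k+1),\Sigma}$, a pigeonhole observation shows that, having removed only $k-1$ arms, at least one of the $k$ arms with true ratio $\geq R$ survives, so $S_k$ contains a suboptimal arm $j^{*}$ with $R_{j^{*},\Sigma}\geq R$. Next I would analyze the true counterpart $U^{\mathrm{true}}_{i,k}=\max_{A\subset S_k,|A|=h}\min_{i\in B\subset S_k,|B|=h}\mathcal{D}_{\Sigma}(A,B)$. Using that $\mathcal{D}_{\Sigma}(A,B)\geq 1$ iff $A$ is at least as correlated as $B$ and that $S^{*}$ is the most correlated $h$-subset, taking $A=S^{*}$ (legal since $S^{*}\subseteq S_k$) gives $U^{\mathrm{true}}_{i^{*},k}=1$ for every optimal $i^{*}$ and $U^{\mathrm{true}}_{j^{*},k}\geq R_{j^{*},\Sigma}\geq R$. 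On $\mathcal{E}_k$ every $1-\hat{\sigma}$ is within a factor $R^{\pm 1/4}$ of $1-\sigma$, so each $\hat{\mathcal{D}}_k(A,B)$ is strictly within a factor $R^{\pm 1/2}$ of $\mathcal{D}_{\Sigma}(A,B)$ (the multiplicative factor passes through the positive sums in numerator and denominator, and through the $\min_B$ then $\max_A$ since these operations preserve strict pointwise inequalities). Hence $U_{i^{*},k}<R^{1/2}$ for all optimal $i^{*}$ while $U_{j^{*},k}>R^{1/2}$, so $\arg\max_{i\in S_k}U_{i,k}$ cannot be optimal, contradicting $F_k$.

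It then remains to bound $\P_{\Sigma}(\mathcal{E}_k^c)$ and to aggregate. Every surviving pair $(j,\ell)\in S_k$ has been sampled in all rounds $1,\dots,k$, so $Y_{j,\ell,n_k}$ is chi-square with $n_k$ degrees of freedom (the fresh sample vectors are independent of the past, so adaptivity does not affect the law), and a union bound over the at most $K^2$ pairs together with Lemma 5 yields $\P_{\Sigma}(\mathcal{E}_k^c)\leq K^2\exp(-n_k\,\alpha(R^{1/4}))$. Applying the Remark with $q=1/4$ gives $\alpha(R^{1/4})\geq c_1(1/4)\,\alpha(R)$. The final ingredient is the harmonic-sum inequality: since $1/\alpha(R_{(j),\Sigma})$ is nonincreasing in $j$, for every $i\in\{h+1,\dots,K\}$ one gets $\tfrac{i}{\alpha(R_{(i),\Sigma})}\leq \tfrac{h}{\alpha(R_{(h+1),\Sigma})}+\sum_{j=h+1}^{i}\tfrac{1}{\alpha(R_{(j),\Sigma})}\leq \mathrm{H_C}$. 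With $i=K-k+1$ and $n_k\geq \tfrac{n-K-1}{\overline{\log(\frac{K}{h})}(K+1-k)}$ this bounds $n_k\,\alpha(R)\geq \tfrac{n-K-1}{\overline{\log(\frac{K}{h})}\,\mathrm{H_C}}$, so each round contributes at most $K^2\exp\!\big(-c(n-K-1)/(\overline{\log(\frac{K}{h})}\,\mathrm{H_C})\big)$; summing over the at most $K$ rounds gives the stated $K^3$ prefactor.

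The main obstacle I anticipate is the deterministic step of the second paragraph: establishing that the $\max_A\min_B$ combinatorial statistic $U_{i,k}$ has true value exactly $1$ on optimal arms and at least $R_{(K-k+1),\Sigma}$ on the surviving bad arm $j^{*}$, and that per-entry multiplicative accuracy of $\hat{\sigma}$ transfers cleanly (including the $\tfrac{0}{0}=1$ boundary cases) to multiplicative accuracy of the ratio $\hat{\mathcal{D}}_k(A,B)$ uniformly over $A$ and $B$. The remaining pieces — the error decomposition, the pigeonhole count, the chi-square concentration, and the harmonic-sum bound — are routine given Lemma 5 and the Remark.
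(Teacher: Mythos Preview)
Your proposal is correct and follows essentially the same approach as the paper. The paper defines a global good event $G=\bigcap_k G_k$ (with $G_k$ taken over all pairs in $[K]$ rather than only $S_k$) and shows by induction that on $G$ no optimal arm is ever rejected, then bounds $\P(G^c)$ exactly via Lemma~5, the relation $\alpha(R^{1/4})\geq c_1(1/4)\alpha(R)$, and the inequality $k/\alpha(R_{(k),\Sigma})\leq \mathrm{H_C}$; your per-round decomposition $F_k\subseteq\mathcal{E}_k^c$, the pigeonhole survival argument, and the harmonic-sum step are the same ingredients arranged slightly differently, and your concern about the deterministic max--min step is precisely what the paper's Step~One handles by choosing $A=S^{*}$ (for the surviving bad arm) and $B=S^{*}$ (for each optimal arm) directly rather than routing through $U^{\mathrm{true}}$.
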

\begin{proof} Without loss of generality, we assume that $S^{*}=[h]$ and $$1=R_{1,\Sigma}=\ldots = R_{h,\Sigma} < R_{h+1,\Sigma} \leq \ldots \leq R_{K,\Sigma}. $$ 
For $k=1,\ldots,K-h$, define the event 
$$G_k = \bigcap_{j,\ell \in [K], j \neq \ell} \left\{ R_{K+1-k,\Sigma}^{-\frac{1}{4}}(1-\sigma_{j,\ell})  <   1-\hat{\sigma}_{j,\ell, n_k} < R_{K+1-k,\Sigma}^{\frac{1}{4}}(1-\sigma_{j,\ell})   \right\} $$
and $G=\cap_{k \in [K-h]} G_k$. $G$ is the event when the correlation estimators are close to the true correlations. Observe that when $G$ occurs, the following inequalities hold true 
$$  R_{K+1-k, \Sigma}^{-\frac{1}{2}} \mathcal{D}_{\Sigma}(A,B)  < \hat{\mathcal{D}}_k (A,B)  < R_{K+1-k, \Sigma}^{\frac{1}{2}} \mathcal{D}_{\Sigma}(A,B) .$$ The rest of the proof is organized into two steps. \\

\noindent\textbf{Step One:} In this step, we prove that when $G$ occurs, SR-C always finds the optimal subset, i.e. $S_{K-h+1} = S^{*}$.  We do this by induction. Fix a $k \in [K-h]$ and assume that none of the optimal arms have ever been rejected before the round $k$. We need to show that none of them are rejected at the end of the round $k$. 

First, there must exist a suboptimal arm $\underline{i}$ that belongs to $\{K+1-k,\ldots, K\} \cap S_k$ since $|S_k|= K+1-k$. Then for any $B$ such that $\underline{i} \in B \subset S_k, |B|=h$, one has
$$  \hat{\mathcal{D}}_k (S^{*},B) > R_{K+1-k,\Sigma}^{-\frac{1}{2}} \mathcal{D}_{\Sigma}(S^{*},B) \geq R_{K+1-k, \Sigma}^{\frac{1}{2}} $$
where the last inequality follows from  $\mathcal{D}_{\Sigma}(S^{*},B) \geq R_{\underline{i}, \Sigma} \geq R_{K+1-k, \Sigma}$. Therefore, we have $U_{\underline{i},k} > R_{K+1-k, \Sigma}^{\frac{1}{2}} $. 

Next, for any $A$ such that $A \subset S_k, |A|=h$, one has
$$  \hat{\mathcal{D}}_k (A, S^{*}) < R_{K+1-k,\Sigma}^{\frac{1}{2}} \mathcal{D}_{\Sigma}(A,S^{*}) \leq R_{K+1-k, \Sigma}^{\frac{1}{2}}. $$ 
This shows that for any optimal arm $\overline{i} \in S^{*}$, we have $U_{\overline{i},k} < R_{K+1-k, \Sigma}^{\frac{1}{2}} < U_{\underline{i},k} $. Thus, $\underline{i}$ is not removed at the end of the round $k$. \\ 

\noindent\textbf{Step Two:} From step one, we know that the probability of error of SR-C is bounded by $\P(G^{c})$. Using Lemma 5, one obtains 

\begin{eqnarray*}
\P(G^{c}) & \leq &  K(K-1) \sum_{k=1}^{K-h} \exp\left(- n_k \cdot \alpha\left( R_{K+1-k,\Sigma}^{\frac{1}{4}} \right) \right)\\
  & \leq &  K(K-1) \sum_{k=1}^{K-h} \exp\left(- \frac{c_1(\frac{1}{4})(n-K-1) \alpha\left( R_{K+1-k,\Sigma} \right) }{\overline{\log(\frac{K}{h})}(K+1-k)}  \right) \\
  & = &  K(K-1) \sum_{k=h+1}^{K} \exp\left(- \frac{c_1(\frac{1}{4})(n-K-1) \alpha\left( R_{k,\Sigma} \right) }{\overline{\log(\frac{K}{h})}k}  \right) \\
  & \leq &  K(K-1) \sum_{k=h+1}^{K} \exp\left(- \frac{c_1(\frac{1}{4})(n-K-1)  }{\overline{\log(\frac{K}{h})} \mathrm{H_C} }  \right) \\
  & \leq &  K^3 \exp\left(- \frac{c_1(\frac{1}{4})(n-K-1)  }{\overline{\log(\frac{K}{h})} \mathrm{H_C} }  \right),
\end{eqnarray*} 
which completes the proof.
\end{proof}

\section{Fixed-Confidence Setting: Successive Elimination for Correlation }
In this section, we introduce and study a new algorithm, SE-C (Successive Elimination for Correlation). See Figure 3 for its complete description.  SE-C is a more dynamic algorithm than SR-C. SE-C updates the suboptimality estimates for the existing arms at each time step rather than at the end of each round. And while SR-C always rejects an arm at each of some prespecified time steps, SE-C eliminates arms only when those arms are judged suboptimal with enough confidence.  The following theorem shows that SE-C needs at most $\tilde{\mathcal{O}}\left( \mathrm{H_C} +K \right)$ samples to find the optimal subset.

\begin{figure}[h!]
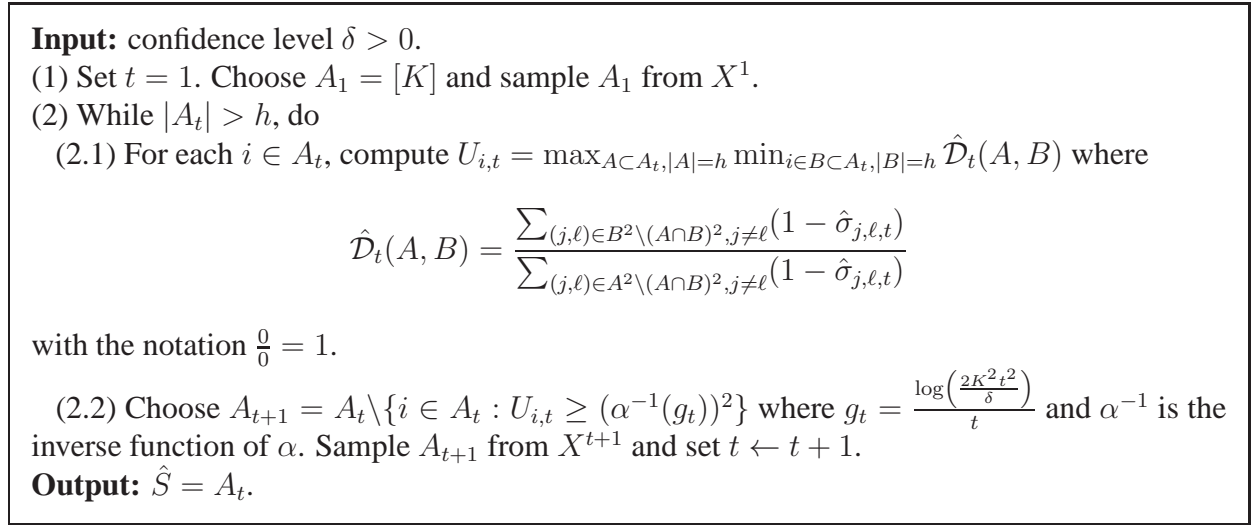

\bookbox{ 
\textbf{Input:} confidence level $\delta > 0$.  \\
(1) Set $t=1$. Choose $A_1 = [K]$ and sample $A_1$ from $X^1$. \\
(2) While $|A_t| > h$, do  \\
 \text{  }\text{  } (2.1) For each $i \in A_t$, compute $U_{i,t} = \max_{A \subset A_t, |A|=h} \min_{ i \in B \subset A_t, |B|=h} \hat{\mathcal{D}}_t(A,B)$ where 
$$\hat{\mathcal{D}}_t(A,B) = \frac{\sum_{(j,\ell) \in B^2 \backslash (A \cap B)^2, j \neq \ell} (1-\hat{\sigma}_{j,\ell, t}) }{ \sum_{(j,\ell) \in A^2 \backslash (A \cap B)^2, j \neq \ell} (1-\hat{\sigma}_{j,\ell, t})}  $$
with the notation $\frac{0}{0}=1$.\\
 \text{  }\text{  }  (2.2) Choose $A_{t+1} = A_t \backslash \{i \in A_t : U_{i,t} \geq (\alpha^{-1}(g_t))^2 \}$ where  $g_t= \frac{\log \left(  \frac{2K^2t^2}{\delta} \right) }{t}$ and $\alpha^{-1}$ is the inverse function of  $\alpha$. Sample $A_{t+1}$ from $X^{t+1}$ and set $t \leftarrow t+1$.  \\
\textbf{Output:} $ \hat{S} = A_t$.
} 
\caption{ SE-C (Successive Elimination for Correlation) algorithm }
\end{figure}

\begin{theorem}
With probability at least $1-\delta$, SE-C returns the optimal subset $S^{*}$ and the number of samples it uses is bounded by $$ \mathcal{O} \left( h \max\left( 1, \frac{\log \left( \frac{2K^2}{\delta \alpha(R_{(h+1),\Sigma})}  \right)}{\alpha(R_{(h+1),\Sigma})} \right)  +   \sum_{i=h+1}^K \max\left( 1, \frac{\log \left( \frac{2K^2}{\delta \alpha(R_{(i),\Sigma})}  \right)}{\alpha(R_{(i),\Sigma})} \right) \right) = \tilde{\mathcal{O}}(\mathrm{H_C} + K) .$$
\end{theorem}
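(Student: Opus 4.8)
The plan is to analyze SE-C on a single favorable event on which every difference-based estimator stays accurate simultaneously across \emph{all} rounds, mirroring the classical Successive Elimination argument but with the multiplicative deviation bounds of Lemma 5. Writing $\theta_t = \alpha^{-1}(g_t)$ so that $t\,\alpha(\theta_t) = \log(2K^2t^2/\delta)$, I would define the time-uniform event
$$\cE = \bigcap_{t \geq 1} \bigcap_{j \neq \ell \in [K]} \left\{ \theta_t^{-1}(1-\sigma_{j,\ell}) < 1-\hat{\sigma}_{j,\ell,t} < \theta_t\,(1-\sigma_{j,\ell}) \right\}.$$
Since arms are only ever removed, any pair surviving in $A_t$ has been sampled at every round $1,\ldots,t$, so $\hat{\sigma}_{j,\ell,t}$ genuinely rests on $t$ samples and Lemma 5 applies with $\theta = \theta_t$; each one-sided failure then has probability at most $\exp(-t\,\alpha(\theta_t)) = \delta/(2K^2t^2)$. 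A union bound over the $\binom{K}{2}$ pairs, the two deviation directions, and all rounds $t$ gives $\P(\cE^c) \leq \tfrac{\pi^2}{12}\delta < \delta$, and the factor $t^2$ inside $g_t$ is exactly what makes this sum over all rounds converge. Thus $\P(\cE) \geq 1-\delta$, and it remains to show that on $\cE$ the algorithm is both correct and fast.

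On $\cE$, dividing the numerator and denominator estimator bounds shows that for all $A,B \subseteq A_t$ of size $h$, $\theta_t^{-2}\,\cD_{\Sigma}(A,B) < \hat{\mathcal{D}}_t(A,B) < \theta_t^{2}\,\cD_{\Sigma}(A,B)$, which is why the elimination threshold is $\theta_t^2 = (\alpha^{-1}(g_t))^2$. For correctness I would argue inductively that $S^{*} \subseteq A_t$ for all $t$. Assuming $S^{*} \subseteq A_t$, for an optimal arm $\bar{i}$ the inner minimiser may be taken to be $B = S^{*}$, and since $\cD_{\Sigma}(A,S^{*}) \leq 1$ for every $A$ (recall $\cD_{\Sigma}(A,B)\geq 1$ iff $\sum_{A}\sigma \geq \sum_{B}\sigma$, and $S^{*}$ maximises the total correlation), one gets $\max_A \min_B \cD_{\Sigma}(A,B) = 1$ and hence $U_{\bar{i},t} < \theta_t^2$, so no optimal arm is removed. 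For a suboptimal arm $\underline{i}$, taking $A = S^{*}$ in the outer maximiser gives $\max_A \min_B \cD_{\Sigma}(A,B) \geq R_{\underline{i},\Sigma}$, whence $U_{\underline{i},t} > \theta_t^{-2}\,R_{\underline{i},\Sigma}$; because $\theta_t \to 1$ as $g_t \to 0$, this eventually exceeds $\theta_t^2$ and $\underline{i}$ is eliminated. Together these show the loop terminates and outputs exactly $S^{*}$.

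For the sample complexity I would pin down the elimination round of the arm with ratio $R_{(i),\Sigma}$: the bound $U_{\underline{i},t} > \theta_t^{-2}R_{(i),\Sigma}$ guarantees removal once $\theta_t^4 \leq R_{(i),\Sigma}$, i.e. once $g_t \leq \alpha(R_{(i),\Sigma}^{1/4})$, and by the Remark $\alpha(R_{(i),\Sigma}^{1/4}) \geq c_1(\tfrac14)\,\alpha(R_{(i),\Sigma})$. Solving the transcendental inequality $\log(2K^2t^2/\delta)/t \leq c\,\alpha(R_{(i),\Sigma})$ yields an elimination round $\tau_i = \mathcal{O}\!\left(\max\!\left(1,\frac{\log(2K^2/(\delta\,\alpha(R_{(i),\Sigma})))}{\alpha(R_{(i),\Sigma})}\right)\right)$, the $\max$ with $1$ absorbing easy arms. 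Since arms are only removed, arm $i$ is sampled exactly on rounds $1,\ldots,\tau_i$, and the $h$ optimal arms survive until the hardest suboptimal arm (ratio $R_{(h+1),\Sigma}$, largest $\tau$) departs, so the total sample count is at most $h\,\tau_{h+1} + \sum_{i=h+1}^{K}\tau_i$, which is precisely the claimed expression and equals $\tilde{\mathcal{O}}(\mathrm{H_C}+K)$ once the logarithmic factors are hidden and the $K$ additive ones are collected.

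The main obstacle I expect is the last step: inverting $\log(2K^2t^2/\delta)/t \leq c\,\alpha(R_{(i),\Sigma})$ cleanly into the stated $\tau_i$ requires an auxiliary lemma of the form ``$\log(at^2)/t \leq b$ for every $t \geq (C/b)\log(a/b)$'' with explicit constants, together with careful bookkeeping of how many samples each arm accrues before elimination and of the off-by-one at the stopping round. By contrast, the correctness argument is essentially the fixed-confidence mirror of the SR-C proof and should go through with only the minor care needed for the $\tfrac{0}{0}=1$ convention in $\cD_{\Sigma}$ and $\hat{\mathcal{D}}_t$ when the index sets $B^2 \backslash (A\cap B)^2$ are empty.
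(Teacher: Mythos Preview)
Your proposal is correct and follows essentially the same route as the paper: the event $\cE$ coincides with the paper's good event $G$, the union bound with the $t^{-2}$ slack, the two-sided multiplicative control of $\hat{\cD}_t$ by $\theta_t^{\pm 2}$, the choice $B=S^{*}$ (resp.\ $A=S^{*}$) to protect optimal (resp.\ eliminate suboptimal) arms, and the inversion of $g_t \leq \alpha(R^{1/4})$ via the Remark all match exactly. Your anticipated ``main obstacle''---cleanly inverting $\log(2K^2t^2/\delta)/t \leq c\,\alpha(R)$---is indeed swept under the $\mathcal{O}(\cdot)$ in the paper as well.
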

\begin{proof} Without loss of generality, we assume that $S^{*}=[h]$ and $$1=R_{1,\Sigma}=\ldots = R_{h,\Sigma} < R_{h+1,\Sigma} \leq \ldots \leq R_{K,\Sigma}. $$ 
Define the event 
$$G = \bigcap_{j,\ell \in [K], j \neq \ell} \left\{ \forall t \geq 1,  (\alpha^{-1}(g_t))^{-1}(1-\sigma_{j,\ell})  <   1-\hat{\sigma}_{j,\ell, t} < \alpha^{-1}(g_t) (1-\sigma_{j,\ell})   \right\}. $$
Observe that when $G$ occurs, the following inequalities hold true 
$$   (\alpha^{-1}(g_t))^{-2} \mathcal{D}_{\Sigma}(A,B)  < \hat{\mathcal{D}}_t (A,B)  < (\alpha^{-1}(g_t))^{2}  \mathcal{D}_{\Sigma}(A,B) .$$ 
Once again, we use Lemma 5 to bound the probability of $G^c$ as 
$$ \P(G^c) \leq K(K-1)\sum_{t=1}^{+\infty}\exp(-t g_t ) \leq  \sum_{t=1}^{+\infty} \frac{\delta}{2t^2} \leq \delta. $$
Now to prove the theorem, it is enough to show that under the event $G$, SE-C always terminates with the optimal subset $S^{*}$ and the number of samples it uses is bounded as claimed. These two facts are proved separately in the following two steps.  \\

\noindent\textbf{Step One:}
 Fix a $t \geq 1$. Assume that all of the optimal arms are still present at the beginning of time $t$, that is, $S^{*} \in A_t$. 
For any $A$ such that $A \subset A_t, |A|=h$, one has
$$  \hat{\mathcal{D}}_t (A,S^{*})  < (\alpha^{-1}(g_t))^{2}  \mathcal{D}_{\Sigma}(A,S^{*}) \leq (\alpha^{-1}(g_t))^{2} .$$ 
This implies that for any optimal arm $\overline{i} \in S^{*}$, we have $U_{\overline{i},t} < (\alpha^{-1}(g_t))^{2}  $ and thus $\underline{i}$ is not eliminated at the end of round $t$. \\

\noindent\textbf{Step Two:} Consider a suboptimal arm $\underline{i}$. If $\underline{i}$ is not eliminated before or at the end of time $t$, then 
\begin{eqnarray*}
(\alpha^{-1}(g_t))^{2}   & > &  U_{\underline{i},t} \\
 &\geq &   \min_{ \underline{i} \in B \subset A_t, |B|=h} \hat{\mathcal{D}}_t(S^{*},B) \\
&\geq &  (\alpha^{-1}(g_t))^{-2}  \min_{ \underline{i} \in B \subset A_t, |B|=h} \mathcal{D}_{\Sigma}(S^{*},B) \\
&= &  (\alpha^{-1}(g_t))^{-2} R_{\underline{i}, \Sigma}
\end{eqnarray*}
Therefore, $\underline{i}$ cannot be sampled more than $t$ times as long as $t$ is such that $g_t \leq \alpha\left(  R_{\underline{i}, \Sigma}^{\frac{1}{4}} \right)$. The latter inequality holds true as long as  $g_t \leq c_2\left(\frac{1}{4}\right) \alpha\left(   R_{\underline{i}, \Sigma}\right)$.   Thus, any suboptimal arm $i$ cannot be sampled more than 
$ \mathcal{O} \left( \max\left( 1, \frac{\log \left( \frac{2K^2}{\delta \alpha(R_{i,\Sigma})}  \right)}{\alpha(R_{i,\Sigma})} \right) \right) $ times. Moreover, the algorithm stops once all the supoptimal arms have been eliminated. Thus, an optimal arm cannot be sampled more than $ \mathcal{O} \left( \max\left( 1, \frac{\log \left( \frac{2K^2}{\delta \alpha(R_{h+1,\Sigma})}  \right)}{\alpha(R_{h+1,\Sigma})} \right) \right) $ times, which completes the proof.
\end{proof}

\section{Discussion}
This work is a first step towards understanding the hardness of finding the most correlated arms with an adaptive sampling scheme. We proposed two algorithms SR-C and SE-C, and we show that both succeed with at most $\tilde{ \mathcal{O}}\left(h\cdot \alpha (R_{(h+1),\Sigma})^{-1}+ \sum_{i=h+1}^{K} \alpha (R_{(i),\Sigma})^{-1}+K \right)$ samples. 
The result of Section \ref{sec:nonadaptivelowerbound} together with known arguments from the best arm identification literature strongly indicates that the term
$\sum_{i=h+1}^{K} \alpha (R_{(i),\Sigma}) ^{-1}$ 
is unavoidable.
On the other hand it is clear that the term $h\cdot \alpha (R_{(h+1),\Sigma})^{-1}$ 
is suboptimal in some cases.
For example, consider the problem instances that are equivalent, up to a permutation of the arms, to the one with the correlation matrix $\Sigma$ described in Section \ref{sec:nonadaptivelowerbound}. In these cases there is always a block of $h-1$ perfectly correlated arms, and thus 
one vector sample will allow to identify this block.
Consequently, for these problem instances a trivial modification of SR-C and SE-C will result in a reduced upper bound $\tilde{ \mathcal{O}}\left(\alpha (R_{(h+1),\Sigma})^{-1}+ \sum_{i=h+1}^{K} \alpha (R_{(i),\Sigma})^{-1}+K \right)$.  It is an interesting challenge to design algorithms that can significantly reduce the term $\tilde{ \mathcal{O}} \left( h\cdot \alpha (R_{(h+1),\Sigma})^{-1} \right)$ for general problem instances. It is worth noting that a similar term $\tilde{ \mathcal{O}} \left( h\cdot \Delta_{(h+1)}^{-2} \right)$ was observed when the best arm identification algorithm Successive Rejects was used for the task of identifying the $h$ best arms. This term can be significantly reduced by allowing the algorithm to accept seemingly optimal arms early, in the same way as seemingly suboptimal arms are rejected early, see \cite{KTAS12}  and \cite{BWV13} for details. Unfortunately, the same trick can not be applied easily to the problem of most correlated arms identification. The main difficulty is that when an optimal arm is accepted early, the remaining optimal arms are not necessarily the most mutually correlated arms among all the remaining arms, thus making the identification of the remaining optimal arms difficult (if not impossible). To summarize, a novel algorithmic idea is needed to improve our upper bound. Another interesting direction of further work is to prove a lower bound on the number of samples that any adaptive strategy must use.   \\


\bibliographystyle{plainnat}
\bibliography{newbib}

\newpage
\appendix

\section{Technical Lemmas}

\begin{lemma}{(Concentration Inequalities for Chi-Square Distributions)}
Let $Y$ be a random variable following the chi-square distribution with degree of freedom $t\in \N$.  Then for any $\theta \geq 1$, the following concentration inequalities hold:
$$\P\left(\frac{Y}{t} \leq \frac{1}{\theta} \right) \leq \exp\left(-t\cdot \alpha(\theta)\right) $$
$$\P\left(\frac{Y}{t} \geq \theta \right)  \leq \exp\left(-\frac{t}{2}(\theta-1-\log \theta)\right) \leq \exp\left(-t\cdot \alpha(\theta)\right).$$
\end{lemma}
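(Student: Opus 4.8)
The plan is to prove both tail bounds by the standard Chernoff (exponential-Markov) method, using the fact that $Y$ is a sum of $t$ independent squared standard Gaussians and therefore has the explicit moment generating function $\E[e^{sY}] = (1-2s)^{-t/2}$ for all $s < \tfrac12$. Each tail bound then reduces to minimizing an explicit one-parameter exponent in $s$, and the function $\alpha$ drops out of the optimizer.

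For the lower tail, I would fix $s > 0$ and write $\P(Y/t \le 1/\theta) = \P(e^{-sY} \ge e^{-st/\theta}) \le e^{st/\theta}\,\E[e^{-sY}] = \exp\!\big(\tfrac{st}{\theta} - \tfrac{t}{2}\log(1+2s)\big)$, which is legitimate for every $s>0$ since then the MGF is evaluated at $-s < \tfrac12$. Differentiating the exponent in $s$ gives the minimizer $s = (\theta-1)/2$, which is nonnegative precisely because $\theta \ge 1$; substituting it collapses the exponent to $-\tfrac{t}{2}\big(\log\theta - 1 + \tfrac1\theta\big) = -t\,\alpha(\theta)$, yielding the first inequality exactly.

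For the upper tail I would run the analogous argument with a positive exponent: for $0 < s < \tfrac12$, $\P(Y/t \ge \theta) \le e^{-st\theta}\,\E[e^{sY}] = \exp\!\big(-st\theta - \tfrac{t}{2}\log(1-2s)\big)$. The minimizer is $s = (\theta-1)/(2\theta) \in [0,\tfrac12)$, and substituting it gives the exponent $-\tfrac{t}{2}(\theta - 1 - \log\theta)$, which is the middle expression in the second claim.

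It then remains to check the final comparison $\tfrac{t}{2}(\theta - 1 - \log\theta) \ge t\,\alpha(\theta)$, i.e. $\theta - \tfrac1\theta - 2\log\theta \ge 0$ for $\theta \ge 1$. I would verify this by a one-line monotonicity argument: the left-hand side vanishes at $\theta = 1$ and has derivative $1 + \tfrac1{\theta^2} - \tfrac2\theta = (\theta-1)^2/\theta^2 \ge 0$, so it is nondecreasing on $[1,\infty)$ and hence nonnegative. The computation is entirely routine; the only places demanding care are keeping the sign of the exponential-Markov step correct for the lower tail (so the inequality points the right way) and confirming that both optimizers stay in the admissible range $s < \tfrac12$ forced by the domain of the moment generating function. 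I expect no genuine obstacle beyond this bookkeeping.
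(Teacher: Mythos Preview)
Your proposal is correct and is precisely the Chernoff (exponential-Markov) argument that the paper invokes; the paper's own proof consists only of the sentence ``The proof is a simple application of Chernoff's bounding technique,'' so your write-up is simply a fleshed-out version of the same method.
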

\begin{proof}
The proof is a simple application of Chernoff's bounding technique.
\end{proof}

\begin{lemma}
Let $\mu_0$ and $\mu_1$ be two probability distributions on some set $\mathcal{X}$ , with $\mu_0$ absolutely continuous with respect to  $\mu_1$. Let $X$ be a random variable taking values in  $\mathcal{X}$. Then for any measurable function $\phi : \mathcal{X} \rightarrow \{0,1\}$, one has
$$ \max\left(\P_{X \sim \mu_0}(\phi(X)=1), \P_{X \sim \mu_1}(\phi(X)=0)\right) \geq \frac{1}{4}\exp\left(-\KL(\mu_0,\mu_1)\right).$$
\end{lemma}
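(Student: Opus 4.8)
The plan is to reduce the two-sided statement to a single test-independent quantity and then control that quantity by the Bhattacharyya affinity. First I would note that since $\max(a,b)\ge\tfrac12(a+b)$ for nonnegative reals, it suffices to lower bound the sum of the two error probabilities $\P_{\mu_0}(\phi=1)+\P_{\mu_1}(\phi=0)$ by $\tfrac12\exp(-\KL(\mu_0,\mu_1))$. Introducing a common dominating measure $\nu$ (for instance $\nu=\mu_0+\mu_1$) with densities $p=d\mu_0/d\nu$ and $q=d\mu_1/d\nu$, and writing $A=\{x\in\cX:\phi(x)=1\}$, the two errors are $\int_A p\,d\nu$ and $\int_{A^c} q\,d\nu$. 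Bounding each integrand below by $\min(p,q)$ and recombining the two regions gives the test-independent lower bound
$$\P_{\mu_0}(\phi=1)+\P_{\mu_1}(\phi=0)\ \ge\ \int_{\cX}\min(p,q)\,d\nu,$$
so the whole problem collapses to the single inequality $\int_{\cX}\min(p,q)\,d\nu\ge\tfrac12\exp(-\KL(\mu_0,\mu_1))$.

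The core of the argument is this last inequality, which I would establish through the affinity $\int_{\cX}\sqrt{pq}\,d\nu$. Writing $\sqrt{pq}=\sqrt{\min(p,q)}\cdot\sqrt{\max(p,q)}$ and applying Cauchy--Schwarz gives $\bigl(\int\sqrt{pq}\bigr)^2\le\bigl(\int\min(p,q)\bigr)\bigl(\int\max(p,q)\bigr)$, and since $\int_{\cX}\max(p,q)\,d\nu\le\int_{\cX}(p+q)\,d\nu=2$ this rearranges to
$$\int_{\cX}\min(p,q)\,d\nu\ \ge\ \frac{1}{2}\left(\int_{\cX}\sqrt{pq}\,d\nu\right)^{2}.$$
It then remains to lower bound the affinity by $\exp(-\KL/2)$. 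Here I would write $\int_{\cX}\sqrt{pq}\,d\nu=\E_{\mu_0}\bigl[\sqrt{q/p}\bigr]=\E_{\mu_0}\bigl[\exp(-\tfrac12\log(p/q))\bigr]$, using $\mu_0\ll\mu_1$ so that the likelihood ratio is well defined $\mu_0$-almost everywhere, and invoke Jensen's inequality for the convex function $x\mapsto\exp(-x/2)$ to obtain $\E_{\mu_0}[\exp(-\tfrac12\log(p/q))]\ge\exp(-\tfrac12\E_{\mu_0}[\log(p/q)])=\exp(-\tfrac12\KL(\mu_0,\mu_1))$. Squaring and substituting back yields $\int\min(p,q)\,d\nu\ge\tfrac12\exp(-\KL)$, and combined with the first paragraph this finishes the proof.

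The individual steps are short, so the only real subtlety — the part I would be most careful about — is the direction of Jensen's inequality in the affinity bound: because $\exp(-\cdot/2)$ is convex, Jensen produces a \emph{lower} bound on $\E_{\mu_0}[\exp(-\tfrac12\log(p/q))]$, which is exactly what is needed, but one must take the expectation under $\mu_0$ (matching the convention $\KL(\mu_0,\mu_1)=\E_{\mu_0}[\log(p/q)]$) and use the absolute-continuity hypothesis to guarantee that $q>0$ holds $\mu_0$-almost surely, so that every ratio and logarithm above is well defined. No tool beyond Cauchy--Schwarz and Jensen is required.
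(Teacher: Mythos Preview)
Your proof is correct. The paper does not actually give a proof of this lemma: it simply cites Chapter~2 of Tsybakov's book, and the argument you present (total variation via $\int\min(p,q)$, then Cauchy--Schwarz to the Bhattacharyya affinity, then Jensen to $\exp(-\KL/2)$) is precisely the classical chain of inequalities one finds there, so your approach and the paper's intended reference coincide.
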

\begin{proof}
See Chapter 2 in \cite{T09} for a proof.
\end{proof}

\begin{lemma} Let $1 > \rho_0 > \rho_1 \geq 0$ and $R=\frac{1-\rho_1}{1-\rho_0}$. Let $\Sigma_0 = \bigl( \begin{smallmatrix} 1 & \rho_0 \\ \rho_0 & 1 \end{smallmatrix} \bigr)$ and $\Sigma_1 = \bigl( \begin{smallmatrix} 1 & \rho_1 \\ \rho_1 & 1 \end{smallmatrix} \bigr)$. Then 
$$ \alpha(R) \leq \KL\left( \mathcal{N}(0, \Sigma_0), \mathcal{N}(0, \Sigma_1) \right)
=  \KL\left( \mathcal{N}(\rho_0, 1-\rho_0^2), \mathcal{N}(\rho_1, 1-\rho_1^2) \right)  \leq c\cdot \alpha(R) .$$
where $c > 1$ is an universal constant.
\end{lemma}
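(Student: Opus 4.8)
The plan is to compute the bivariate Gaussian KL divergence explicitly, isolate the piece that equals $\alpha(R)$, and control the remainder. I would obtain the stated equality first, via the chain rule for relative entropy. Writing $X=(X_1,X_2)$, under both $\Sigma_0$ and $\Sigma_1$ the first coordinate is $\mathcal{N}(0,1)$, so its marginal contributes nothing, while the conditional law of $X_2$ given $X_1=x$ is $\mathcal{N}(\rho_i x, 1-\rho_i^2)$ under $\Sigma_i$. Hence
$$\KL\left(\mathcal{N}(0,\Sigma_0),\mathcal{N}(0,\Sigma_1)\right) = \E_{x \sim \mathcal{N}(0,1)}\left[\KL\left(\mathcal{N}(\rho_0 x, 1-\rho_0^2), \mathcal{N}(\rho_1 x, 1-\rho_1^2)\right)\right].$$
The one-dimensional KL inside is affine in $x^2$, its mean-difference term being $\frac{(\rho_0-\rho_1)^2 x^2}{2(1-\rho_1^2)}$; taking the expectation and using $\E[x^2]=1$ replaces $x^2$ by $1$, which is exactly the one-dimensional KL with means $\rho_0,\rho_1$. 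This gives the equality.

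For the two inequalities I would diagonalize through the orthogonal change of variables $U=(X_1+X_2)/\sqrt2$, $V=(X_1-X_2)/\sqrt2$, under which the law factorizes: $U \sim \mathcal{N}(0,1+\rho_i)$ and $V \sim \mathcal{N}(0,1-\rho_i)$ independently under $\Sigma_i$. Additivity of KL over independent coordinates then yields
$$\KL = \KL\left(\mathcal{N}(0,1+\rho_0),\mathcal{N}(0,1+\rho_1)\right) + \KL\left(\mathcal{N}(0,1-\rho_0),\mathcal{N}(0,1-\rho_1)\right).$$
A direct computation of the variance-only Gaussian KL shows the difference coordinate contributes exactly $\alpha(R)$, since $\frac{1-\rho_1}{1-\rho_0}=R$, while the sum coordinate contributes $\beta(S):=\frac12(S-1-\log S)$ with $S=\frac{1+\rho_0}{1+\rho_1}\ge 1$. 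As $\beta(S)\ge 0$, the lower bound $\alpha(R)\le \KL$ is immediate.

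It remains to bound $\beta(S)$ by a constant multiple of $\alpha(R)$, which is where the real work lies. The key observations are that $S$ is confined to $[1,2]$ because $\rho_0,\rho_1\in[0,1)$, and that $S\le R$ (the inequality $\frac{1+\rho_0}{1+\rho_1}\le\frac{1-\rho_1}{1-\rho_0}$ noted earlier, equivalent to $\rho_1^2\le\rho_0^2$). Both $\beta$ and $\alpha$ vanish to second order at $\theta=1$, each being $\frac14(\theta-1)^2+o((\theta-1)^2)$, so the ratio $\beta(\theta)/\alpha(\theta)$ extends continuously to the compact interval $[1,2]$ and is therefore bounded there by some universal constant $c'$; hence $\beta(S)\le c'\alpha(S)$. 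Finally, monotonicity of $\alpha$ together with $S\le R$ gives $\alpha(S)\le\alpha(R)$, so $\KL=\beta(S)+\alpha(R)\le(c'+1)\alpha(R)$, which is the upper bound with $c=c'+1$.

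I expect the comparison $\beta(S)\le c'\alpha(S)$ to be the only genuine obstacle. One must not attempt $\beta\le c\alpha$ globally, since $\beta$ grows linearly while $\alpha$ grows only logarithmically; the argument relies essentially on confining $S$ to $[1,2]$, which is forced by $\rho_i<1$, and then on transferring from $\alpha(S)$ to the possibly smaller $\alpha(R)$ through the inequality $S\le R$.
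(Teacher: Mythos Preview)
Your argument is correct and follows essentially the same route as the paper: both compute the bivariate KL, arrive at the identical decomposition $\KL=\alpha(R)+\beta(S)$ with $S=\frac{1+\rho_0}{1+\rho_1}$ and $\beta(\theta)=\tfrac12(\theta-1-\log\theta)$, and then exploit $S\le 2$ and $S\le R$ together with the quadratic behavior of $\alpha,\beta$ near $1$ to bound $\beta(S)\le c\,\alpha(R)$. The only cosmetic differences are that the paper obtains the equality by plugging into the general Gaussian KL formula (rather than your chain rule / orthogonal diagonalization), and it routes the final bound as $\beta(S)\le\beta(\min(2,R))\le c\,\alpha(R)$ rather than your $\beta(S)\le c'\alpha(S)\le c'\alpha(R)$. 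One wording slip: you call $\alpha(R)$ ``possibly smaller'' than $\alpha(S)$, but since $S\le R$ and $\alpha$ is increasing it is possibly larger; the inequality you actually use, $\alpha(S)\le\alpha(R)$, is of course the right one.
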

\begin{proof} 
Recall the following general formula for the $\KL$-divergence between two Gaussian distributions on $\R^k$,
$$ \KL\left( \mathcal{N}(\mu_0, \Sigma_0), \mathcal{N}(\mu_1, \Sigma_1) \right) =
  \frac{1}{2} \left( \log \left( \frac{\det \Sigma_1}{\det \Sigma_0} \right) + Tr(\Sigma_1^{-1}\Sigma_0) -k + (\mu_1-\mu_0)^T \Sigma_1^{-1} (\mu_1-\mu_0) \right).$$ 
By straightforward application of the above formula, one has 
\begin{eqnarray*}
\KL\left( \mathcal{N}(0, \Sigma_0), \mathcal{N}(0, \Sigma_1) \right)
  & = & \KL\left( \mathcal{N}(\rho_0, 1-\rho_0^2), \mathcal{N}(\rho_1, 1-\rho_1^2) \right) \\ 
  & = & \frac{1}{2} \left(  \log\frac{1-\rho_1}{1-\rho_0} + \frac{1-\rho_0}{1-\rho_1} - 1 + \log\frac{1+\rho_1}{1+\rho_0} + \frac{1+\rho_0}{1+\rho_1} - 1  \right) \\ 
  & = &   \alpha (R) +  \beta \left( \frac{1+\rho_0}{1+\rho_1} \right)  \\ 
\end{eqnarray*}   
where the function $\beta$ is defined as $\beta(\theta) = \theta-1-\log \theta $, for  $\theta \geq 1$. It is easy to see that $\beta$ is a positive, strictly increasing function on $[1,+\infty [$. Since $ \frac{1+\rho_0}{1+\rho_1}  \leq \min(2,R)$, one has 
$$ \beta \left( \frac{1+\rho_0}{1+\rho_1} \right) \leq \beta( \min(2,R) ) .$$ Moreover, because $\beta(R) = \Theta\left((R-1)^2\right)$ when $R \rightarrow 1$, there exists a universal constant $c > 0$ such that $ \beta( \min(2,R) ) \leq c \cdot \alpha(R)  $ for any $R \geq 1$, which completes the proof.
\end{proof}

\begin{lemma}
Consider the two correlation matrices $\Sigma$, $\Sigma^{'}$ described in section 3.2. One has
$$ \KL\left( \mathcal{N}(0, \Sigma^{'}), \mathcal{N}(0, \Sigma)\right) \leq  c \cdot \alpha(R_{h+1, \Sigma}) $$
where $c$ is an universal constant.
\end{lemma}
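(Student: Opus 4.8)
The plan is to exploit the fact that $\Sigma$ and $\Sigma'$ differ in only one ``direction'': by construction $\Sigma'$ is obtained from $\Sigma$ by replacing $\rho_{h+1}$ with $\rho_{h+1}'$, so the two matrices agree on every entry that does not involve the index $h+1$. Recall also that both models carry a block of perfectly correlated arms $\{1,\dots,h-1\}$ (so $X_1 = \dots = X_{h-1}$ almost surely); one may think of $X_1$ as the common latent factor. First I would record the conditional-independence structure: under both $\mathcal{N}(0,\Sigma)$ and $\mathcal{N}(0,\Sigma')$ one has $X_{h+1} \perp (X_j)_{j \neq 1, h+1} \mid X_1$. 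This is immediate from the factor-model reading of the covariance (for $j<h$, $X_j = X_1$, and for $j \ge h$ each $X_j$ carries independent idiosyncratic noise), or can be checked directly, since for every $j \neq 1,h+1$ the conditional covariance $\sigma_{h+1,j} - \sigma_{h+1,1}\sigma_{1,j}$ vanishes, and zero conditional covariance means independence in the jointly Gaussian case.

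The main step is then a chain-rule collapse of the KL divergence. Conditioning on all coordinates other than $h+1$, the chain rule for KL expresses $\KL(\mathcal{N}(0,\Sigma'),\mathcal{N}(0,\Sigma))$ as the KL between the marginals of $(X_j)_{j\neq h+1}$ (which is $0$, since those marginals coincide under $\Sigma$ and $\Sigma'$) plus the expected KL between the conditional laws of $X_{h+1}$. By the conditional independence above, that conditional law depends only on $X_1$, so the whole divergence reduces to the bivariate problem $\KL(\mathcal{N}(0,\Sigma_2'),\mathcal{N}(0,\Sigma_2))$, where $\Sigma_2,\Sigma_2'$ are the $2\times 2$ correlation matrices of $(X_1,X_{h+1})$ with off-diagonal entries $\rho_{h+1}$ and $\rho_{h+1}'$ respectively. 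In other words, all the extra arms drop out and the problem is exactly the two-dimensional situation already analyzed in Lemma 7.

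To finish, I would apply Lemma 7 with the larger correlation $\rho_{h+1}'$ in the role of $\rho_0$ and the smaller correlation $\rho_{h+1}$ in the role of $\rho_1$ (note $\rho_{h+1}' > \rho_h > \rho_{h+1}$). The relevant ratio is $\frac{1-\rho_{h+1}}{1-\rho_{h+1}'}$, and here the definition $\rho_{h+1}' = 1 - \frac{(1-\rho_h)^2}{1-\rho_{h+1}}$ pays off: since $1-\rho_{h+1} = (1-\rho_h)R_{h+1,\Sigma}$ and $1-\rho_{h+1}' = (1-\rho_h)/R_{h+1,\Sigma}$, this ratio equals $R_{h+1,\Sigma}^2$. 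Lemma 7 thus yields $\KL(\mathcal{N}(0,\Sigma'),\mathcal{N}(0,\Sigma)) \le c\,\alpha(R_{h+1,\Sigma}^2)$, and the Remark in Section 1 (applied with $q=2$) gives $\alpha(R_{h+1,\Sigma}^2) \le c_2(2)\,\alpha(R_{h+1,\Sigma})$, completing the proof with a universal constant. I expect the only delicate point to be justifying the reduction to the two-dimensional case rigorously, namely verifying the conditional independence and that the marginals over the remaining coordinates are genuinely identical under $\Sigma$ and $\Sigma'$; once that is in place, the substantive observation is that the effective ratio is $R_{h+1,\Sigma}^2$ rather than $R_{h+1,\Sigma}$, and the rest is bookkeeping handled by Lemma 7 and the Remark.
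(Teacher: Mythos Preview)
Your proposal is correct and follows essentially the same route as the paper: both arguments exploit the factor-model structure to reduce the $K$-dimensional KL to the bivariate KL between the $(X_1,X_{h+1})$-marginals, then invoke Lemma~7 and the Remark with the ratio $\frac{1-\rho_{h+1}}{1-\rho_{h+1}'}=R_{h+1,\Sigma}^2$. The only cosmetic difference is the order of conditioning---the paper conditions on $X_1$ first and uses conditional independence of the remaining coordinates, while you condition on $(X_j)_{j\neq h+1}$ and note that the resulting conditional law of $X_{h+1}$ depends only on $X_1$---but both unwind to the same expected-conditional-KL computation.
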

\begin{proof}
Let $Z_0, Z_h, Z_{h+1}, \ldots, Z_{K}$ be i.i.d. standard Gaussian random variables. Define two random vectors $W=(W_1,...,W_K)$  and $W^{'}=(W^{'}_1,...,W^{'}_K)$ as
$$ W_i = 
\begin{cases} 
Z_0,  & \mbox{if } i \leq h-1 \\
\rho_i Z_0 + \sqrt{1-\rho_i^2} Z_i , & \mbox{if } i \geq h 
\end{cases} 
$$
and
$$ W^{'}_i = 
\begin{cases} 
Z_0,  & \mbox{if } i \leq h-1 \\
\rho_i Z_0 + \sqrt{1-\rho_i^2} Z_i , & \mbox{if } i \geq h \mbox{ and } i \neq h+1  \\
\rho_{h+1}^{'} Z_0 + \sqrt{1-\rho^{'2}_{h+1}} Z_{h+1} , & \mbox{if } i = h+1
\end{cases} .$$  
Then it is easy to see that $W$ and $W^{'}$ are two centered Gaussian vectors with covariance matrices $\Sigma$ and $\Sigma^{'}$. Moreover, conditioning on $W_1 = z$ for some $z \in \R$, $W_2,\ldots, W_K$ are independent and Gaussian distributed as $W_i \sim \mathcal{N}(z,0)$ for $ 2 \leq i \leq h-1$ and $W_i \sim \mathcal{N}(\rho_i z,1-\rho_i^2)$ for $ i \geq h$. Similarly, conditioning on $W^{'}_1 = z$, $W^{'}_2,\ldots, W^{'}_K$ are independent and Gaussian distributed as $W^{'}_i \sim \mathcal{N}(z,0)$ for $ 2 \leq i \leq h-1$, $W^{'}_{h+1} \sim \mathcal{N}(\rho^{'}_{h+1} z,1-\rho_{h+1}^{'2})$ and  $W^{'}_i \sim \mathcal{N}(\rho_i z,1-\rho_i^2)$ for $ i \geq h$ and $i \neq h+1$. In what follows, we use $\mathcal{L}(Y)$ to represent the distribution of a random variable $Y$. Using the conditioning formula for $\KL$-divergence, one has  
\begin{eqnarray*}
&& \KL\left( \mathcal{N}(0, \Sigma^{'}), \mathcal{N}(0, \Sigma)\right) \\
 &= & \KL\left( \mathcal{L}(W_1^{'}), \mathcal{L}(W_1) \right)  +
\E_{z \sim W^{'}_1} \left[ \KL\left( \mathcal{L}((W^{'}_2,...,W^{'}_K) | W^{'}_1 = z), \mathcal{L}((W_2,...,W_K) | W_1 = z) \right) \right] \\
 &= & 
\E_{z \sim \mathcal{N}(0,1)} \left[ \KL\left( \mathcal{N}(\rho^{'}_{h+1} z,1-\rho_{h+1}^{'2}), \mathcal{N}(\rho_{h+1} z,1-\rho_{h+1}^2) \right)  \right] \\
& = & \E_{z \sim \mathcal{N}(0,1)} \left[ \frac{1}{2} \left( \log \left( \frac{1-\rho_{h+1}^2}{1-\rho_{h+1}^{'2}} \right) + \frac{1-\rho_{h+1}^{'2}}{1-\rho_{h+1}^2}  - 1 + \frac{(\rho_{h+1}-\rho_{h+1}^{'})^2 z^2 }{1-\rho_{h+1}^2 }  \right)  \right] \\
& = & \frac{1}{2} \left( \log \left( \frac{1-\rho_{h+1}^2}{1-\rho_{h+1}^{'2}} \right) + \frac{1-\rho_{h+1}^{'2}}{1-\rho_{h+1}^2}  - 1 + \frac{(\rho_{h+1}-\rho_{h+1}^{'})^2  }{1-\rho_{h+1}^2 }  \right) \\
& = &  \KL\left( \mathcal{N}(\rho^{'}_{h+1} ,1-\rho_{h+1}^{'2}), \mathcal{N}(\rho_{h+1} ,1-\rho_{h+1}^2) \right)   \\
& \leq & c \cdot \alpha\left( \frac{1-\rho_{h+1}}{1-\rho^{'}_{h+1}} \right) \\
\end{eqnarray*}
where $c$ is an universal constant and the last step follows from Lemma 7. To conclude, it is enough to observe that $\alpha\left( \frac{1-\rho_{h+1}}{1-\rho^{'}_{h+1}} \right) = \alpha\left( \left(\frac{1-\rho_{h+1}}{1-\rho_{h}} \right)^2 \right) =\alpha(R_{h+1, \Sigma}^2) \leq c_2(2) \alpha(R_{h+1, \Sigma}) $.

\end{proof}

\end{document}